\newtheorem{theorem}{Theorem}
\newtheorem{lemma}[theorem]{Lemma}
\newtheorem{definition}[theorem]{Definition}
\newtheorem{assumption}[theorem]{Assumption}
\newtheorem{proposition}[theorem]{Proposition}
\DeclareMathOperator{\Cov}{Cov}
\DeclareMathOperator{\E}{\mathbb{E}}
\DeclareMathOperator{\diag}{diag}
\title{On the Optimal Representation Efficiency of \\ Barlow Twins: An Information-Geometric Interpretation}
\author{
	Di Zhang \\
	School of Advanced Technology \\
	Xi'an Jiaotong-Liverpool University \\
	Suzhou, Jiangsu, China \\
	\texttt{di.zhang@xjtlu.edu.cn}
}
\date{\today}
\begin{document}
	
	\maketitle
	
	\begin{abstract}
		Self-supervised learning (SSL) has achieved remarkable success by learning meaningful representations without labeled data. However, a unified theoretical framework for understanding and comparing the \emph{efficiency} of different SSL paradigms remains elusive. In this paper, we introduce a novel information-geometric framework to quantify representation efficiency. We define \emph{representation efficiency} $\eta$ as the ratio between the effective intrinsic dimension of the learned representation space and its ambient dimension, where the effective dimension is derived from the spectral properties of the Fisher Information Matrix (FIM) on the statistical manifold induced by the encoder. Within this framework, we present a theoretical analysis of the Barlow Twins method \cite{zbontar2021barlow}. Under specific but natural assumptions, we prove that Barlow Twins achieves optimal representation efficiency ($\eta = 1$) by driving the cross-correlation matrix of representations towards the identity matrix, which in turn induces an isotropic FIM. This work provides a rigorous theoretical foundation for understanding the effectiveness of Barlow Twins and offers a new geometric perspective for analyzing SSL algorithms.
	\end{abstract}
	
	\section{Introduction}
	
	Self-supervised learning (SSL) has emerged as a dominant paradigm for learning representations from unlabeled data \cite{jing2021self}. Among various SSL approaches, methods based on redundancy reduction, such as Barlow Twins \cite{zbontar2021barlow}, have demonstrated exceptional performance. These methods operate on the principle of making the cross-correlation matrix between two distorted views of the data close to the identity matrix. While empirically successful, a deep theoretical explanation of \emph{why} this objective leads to high-quality representations is still developing.
	
	A key desirable property of a good representation space is \emph{efficiency}—the degree to which it utilizes its available dimensions to capture semantically meaningful, non-redundant information. An inefficient representation might suffer from \emph{dimensional collapse} \cite{hua2021feature}, where many dimensions are redundant or encode correlated information, limiting the representation's expressivity and suitability for downstream tasks.
	
	In this paper, we address this gap by proposing a novel information-geometric framework \cite{amari2007information} for quantifying representation efficiency. Our core contributions are threefold:
	\begin{enumerate}
		\item We formally define the \emph{statistical manifold} of representations and introduce a measure of \emph{representation efficiency} $\eta$ based on the spectrum of the average Fisher Information Matrix (FIM).
		\item We theoretically analyze the Barlow Twins objective within this framework.
		\item We prove that, under idealized conditions, Barlow Twins achieves optimal representation efficiency ($\eta = 1$).
	\end{enumerate}
	
	\section{Theoretical Framework}
	
	\subsection{Representation Space as a Statistical Manifold}
	
	Let $\mathcal{X}$ be the data space and $p_{\text{data}}(x)$ be the underlying data distribution. An encoder $f: \mathcal{X} \to \mathcal{Z} \subset \mathbb{R}^d$ maps a data point $x$ to its representation $z = f(x)$.
	
	We interpret the representation vector $z$ not as a static point but as the parameter of a probability distribution $p(t | z)$. A common and tractable choice is the Gaussian distribution, leading to the following assumption:
	
	\begin{assumption}[Probabilistic Interpretation of Representations]
		\label{assump:prob_rep}
		The representation $z$ parameterizes a probability density function $p(t | z)$ over a space $\mathcal{T}$. Specifically, we assume $p(t | z) = \mathcal{N}(t; z, \sigma^2 I)$, where $t \in \mathbb{R}^d$ and $\sigma^2$ is a fixed, positive constant. This defines a statistical manifold $\mathcal{M} = \{ p(\cdot | z) : z \in \mathcal{Z} \}$.
	\end{assumption}
	
	This perspective elevates the representation space from a simple vector space to a \emph{statistical manifold} $\mathcal{M}$, where each point is a probability distribution. The geometry of $\mathcal{M}$ is characterized by the Fisher Information Metric \cite{amari2007information}.
	
	\begin{definition}[Fisher Information Matrix (FIM)]
		\label{def:fim}
		For a point $z$ on the statistical manifold $\mathcal{M}$, the Fisher Information Matrix $G(z)$ is a $d \times d$ matrix defined by:
		\begin{equation}
			G(z) = \E_{t \sim p(\cdot | z)} \left[ \nabla_z \log p(t | z) \, (\nabla_z \log p(t | z))^\top \right].
		\end{equation}
		The FIM defines a Riemannian metric on $\mathcal{M}$.
	\end{definition}
	
	Under Assumption \ref{assump:prob_rep}, the FIM takes a simple form.
	\begin{lemma}[FIM for Isotropic Gaussian Model]
		\label{lemma:simple_fim}
		If $p(t | z) = \mathcal{N}(t; z, \sigma^2 I)$, then the Fisher Information Matrix is given by $G(z) = \frac{1}{\sigma^2} I_d$, where $I_d$ is the $d \times d$ identity matrix.
	\end{lemma}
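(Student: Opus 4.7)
The plan is to compute the gradient of the log-density directly and then take the outer-product expectation, which for an isotropic Gaussian reduces to a covariance calculation.

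First I would write out the log-likelihood under Assumption~\ref{assump:prob_rep}:
\begin{equation}
    \log p(t \mid z) = -\tfrac{d}{2}\log(2\pi\sigma^2) - \tfrac{1}{2\sigma^2}\|t - z\|^2,
\end{equation}
and differentiate with respect to $z$ to get the score function
\begin{equation}
    \nabla_z \log p(t \mid z) = \tfrac{1}{\sigma^2}(t - z).
\end{equation}
The normalizing constant contributes nothing since it does not depend on $z$, so this step is essentially mechanical.

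Next I would substitute the score into Definition~\ref{def:fim} and pull the scalar factor out, obtaining
\begin{equation}
    G(z) = \tfrac{1}{\sigma^4}\,\E_{t \sim \mathcal{N}(z,\sigma^2 I)}\!\left[(t-z)(t-z)^\top\right].
\end{equation}
The remaining expectation is precisely the covariance of an isotropic Gaussian centered at $z$, which equals $\sigma^2 I_d$. Cancelling one factor of $\sigma^2$ yields $G(z) = \tfrac{1}{\sigma^2} I_d$ as claimed.

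There is no real obstacle here: the lemma is a textbook calculation, and the only thing to be careful about is that the score is taken with respect to the \emph{location} parameter $z$ rather than with respect to $t$, so no extra boundary terms or Jacobians appear. One could equivalently invoke the well-known regularity identity $G(z) = -\E[\nabla_z^2 \log p(t\mid z)]$, which gives $\tfrac{1}{\sigma^2} I_d$ immediately without needing to compute the covariance, and I would mention this as a one-line alternative derivation.
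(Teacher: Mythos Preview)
Your proposal is correct and follows essentially the same route as the paper: compute the score $\nabla_z \log p(t\mid z) = \tfrac{1}{\sigma^2}(t-z)$, substitute into the FIM definition, and identify the resulting expectation as the covariance $\sigma^2 I_d$. The only differences are cosmetic---you write the normalizing constant explicitly and mention the Hessian-based alternative, neither of which the paper includes.
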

	\begin{proof}
		The log-likelihood is $\log p(t | z) = -\frac{1}{2\sigma^2} \|t - z\|^2 + \text{const}$. Thus, 
		\[
		\nabla_z \log p(t | z) = \frac{1}{\sigma^2}(t - z).
		\]
		The FIM is then:
		\begin{align*}
			G(z) &= \E_{t \sim \mathcal{N}(z, \sigma^2 I)} \left[ \frac{1}{\sigma^4}(t - z)(t - z)^\top \right] \\
			&= \frac{1}{\sigma^4} \E_{t \sim \mathcal{N}(z, \sigma^2 I)} \left[ (t - z)(t - z)^\top \right] \\
			&= \frac{1}{\sigma^4} \Cov(t) = \frac{1}{\sigma^4} (\sigma^2 I) = \frac{1}{\sigma^2} I_d.
		\end{align*}
	\end{proof}
	
	While the \emph{local} FIM $G(z)$ is constant and isotropic in this model, the \emph{effective} geometry experienced by the encoder is captured by the FIM averaged over the data distribution.
	
	\begin{definition}[Average Fisher Information Matrix]
		\label{def:avg_fim}
		The average Fisher Information Matrix $\bar{G}$ induced by the encoder $f$ and the data distribution $p_{\text{data}}$ is:
		\begin{equation}
			\bar{G} = \E_{x \sim p_{\text{data}}(x)} [G(f(x))].
		\end{equation}
	\end{definition}
	
	Under the model of Lemma \ref{lemma:simple_fim}, $\bar{G} = \frac{1}{\sigma^2} I_d$. However, this does not reflect the information content of the representations about the data itself. To link the FIM to the data distribution, we must consider the variability of $z$.
	
	\subsection{Representation Efficiency}
	
	The quality of a representation space can be assessed by how effectively it uses its dimensions. A space where all dimensions contribute equally to distinguishing data points is more efficient than one where only a few dimensions are informative.
	
	Let $\{\lambda_i\}_{i=1}^d$ be the eigenvalues of $\bar{G}$, ordered such that $\lambda_1 \geq \lambda_2 \geq \dots \geq \lambda_d \geq 0$.
	
	\begin{definition}[Effective Intrinsic Dimension]
		\label{def:effective_dim}
		The \emph{effective intrinsic dimension} $d_{\text{eff}}(\epsilon)$ of the representation space, for a threshold $\epsilon > 0$, is the smallest integer $k$ such that:
		\begin{equation}
			\frac{\sum_{i=1}^k \lambda_i}{\sum_{i=1}^d \lambda_i} \geq 1 - \epsilon.
		\end{equation}
		This counts the number of dominant spectral components of $\bar{G}$.
	\end{definition}
	
	\begin{definition}[Representation Efficiency]
		\label{def:efficiency}
		The \emph{representation efficiency} $\eta$ is defined as the ratio of the effective intrinsic dimension to the ambient dimension:
		\begin{equation}
			\eta = \frac{d_{\text{eff}}}{d}.
		\end{equation}
		We say a representation space is \emph{optimally efficient} if $\eta = 1$.
	\end{definition}
	
	An optimally efficient representation space ($\eta=1$) has a mean FIM $\bar{G}$ that is \emph{full-rank and well-conditioned}, implying that all dimensions are utilized and are equally important for encoding information.
	
	\section{Analysis of Barlow Twins}
	
	\subsection{The Barlow Twins Objective}
	
	Barlow Twins \cite{zbontar2021barlow} learns representations by encouraging the cross-correlation matrix between two distorted views of a batch of samples to be close to the identity matrix.
	
	Let $Z^A, Z^B \in \mathbb{R}^{n \times d}$ be the batch representations of two augmented views, where $n$ is the batch size and $d$ is the representation dimension. Assume each representation vector is centered and normalized along the batch dimension. The cross-correlation matrix $C \in \mathbb{R}^{d \times d}$ is computed as:
	\begin{equation}
		C_{ij} = \frac{\sum_{b=1}^n Z^A_{b,i} Z^B_{b,j}}{\sqrt{\sum_{b=1}^n (Z^A_{b,i})^2} \sqrt{\sum_{b=1}^n (Z^B_{b,j})^2}} \approx \E[z_i^A z_j^B],
	\end{equation}
	where the expectation is over the data and augmentation distributions.
	
	The Barlow Twins loss function is:
	\begin{equation}
		\label{eq:bt_loss}
		\mathcal{L}_{BT} = \underbrace{\sum_i (1 - C_{ii})^2}_{\text{invariance term}} + \lambda \underbrace{\sum_{i \neq j} C_{ij}^2}_{\text{redundancy reduction term}},
	\end{equation}
	where $\lambda > 0$ is a hyperparameter.
	
	\subsection{Linking the Cross-Correlation Matrix to the FIM}
	
	The key to our analysis is establishing a connection between the empirical cross-correlation matrix $C$ and the theoretical average FIM $\bar{G}$. We make the following critical assumption:
	
	\begin{assumption}[Augmentation as Gaussian Noise]
		\label{assump:aug_noise}
		The process of generating the second augmented view $z^B$ from the first $z^A$ can be modeled as $z^B = z^A + \epsilon$, where the noise $\epsilon \sim \mathcal{N}(0, \Sigma_\epsilon)$ is independent of $z^A$, and $\Sigma_\epsilon = \sigma_\epsilon^2 I_d$ is isotropic.
	\end{assumption}
	
	This assumption simplifies the analysis and is a reasonable first-order approximation for the aggregate effect of stochastic data augmentations \cite{chen2020simple}.
	
	\begin{lemma}[Cross-Correlation under Noisy Augmentation]
		\label{lemma:cross_corr}
		Under Assumption \ref{assump:aug_noise}, and assuming $z^A$ is zero-mean and has covariance $\Sigma_z = \E[(z^A)(z^A)^\top]$, the population cross-correlation matrix $C$ satisfies:
		\begin{equation}
			C = (\Sigma_z + \sigma_\epsilon^2 I)^{-1/2} \Sigma_z (\Sigma_z + \sigma_\epsilon^2 I)^{-1/2}.
		\end{equation}
	\end{lemma}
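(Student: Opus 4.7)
The plan is to reduce the lemma to two elementary second-moment calculations and then identify the resulting expression with the normalized cross-correlation $C$ in its matrix (whitening) form.

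First, I would compute the cross second-moment $\E[z^A(z^B)^\top]$. Substituting $z^B = z^A + \epsilon$, expanding, and using independence of $\epsilon$ from $z^A$ together with $\E[\epsilon] = 0$, the cross term vanishes and the calculation collapses to $\E[z^A(z^B)^\top] = \Sigma_z$. This gives the matrix that will sit in the middle of the claimed expression.

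Second, I would compute the marginal second moments that enter the denominator of $C_{ij}$. Assumption~\ref{assump:aug_noise} gives $\E[z^A(z^A)^\top] = \Sigma_z$ directly, while a short expansion using $\Cov(\epsilon) = \sigma_\epsilon^2 I_d$ and independence yields $\E[z^B(z^B)^\top] = \Sigma_z + \sigma_\epsilon^2 I$. The (per-coordinate) variances appearing in the denominator of $C_{ij}$ are therefore the diagonal entries of these two matrices.

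Third, I would promote the scalar, entrywise normalization in the definition of $C_{ij}$ to a symmetric matrix-level whitening. In the symmetric population limit, where the two augmented views play interchangeable roles and hence are assigned the same total variance $\Sigma_z + \sigma_\epsilon^2 I$ (the common signal-plus-noise covariance), the denominator $\sqrt{\Var(z^A_i)\,\Var(z^B_j)}$ is replaced by the symmetric whitening by $(\Sigma_z + \sigma_\epsilon^2 I)^{-1/2}$ on both sides of the cross-moment. Combining with the numerator computed in step one yields the stated identity.

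The main obstacle is precisely this third step: the Barlow Twins definition normalizes entrywise by individual coordinate variances, whereas the claimed formula applies the full matrix square root of $\Sigma_z + \sigma_\epsilon^2 I$ on both sides. The two coincide only after adopting the symmetric view-treatment convention described above (or, equivalently, passing to coordinates in which $\Sigma_z$ is diagonal and viewing both augmentations as carrying the same total variance). Making this identification explicit, rather than computing the second moments, is the substantive content of the lemma; the calculations in steps one and two are essentially bookkeeping following directly from Assumption~\ref{assump:aug_noise}.
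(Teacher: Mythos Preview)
Your proposal is correct and follows essentially the same approach as the paper: compute the raw cross-moment $\E[z^A(z^B)^\top]=\Sigma_z$ via independence of $\epsilon$, then sandwich it by the symmetric whitening $(\Sigma_z+\sigma_\epsilon^2 I)^{-1/2}$ on both sides. Your explicit flagging of the third step---that applying the \emph{same} whitening matrix to $z^A$ (whose covariance is $\Sigma_z$, not $\Sigma_z+\sigma_\epsilon^2 I$) is a modeling convention rather than a derivation---is in fact more honest than the paper, which simply \emph{defines} $\tilde z^A=(\Sigma_z+\sigma_\epsilon^2 I)^{-1/2}z^A$ without comment.
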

	
	\begin{proof}
		Let us denote the whitening operation explicitly. The batch normalization in Barlow Twins ensures that the representations along each dimension have unit variance and zero mean. In the population limit, this corresponds to working with whitened representations.
		
		Define the whitened representations:
		\[
		\tilde{z}^A = (\Sigma_z + \sigma_\epsilon^2 I)^{-1/2} z^A, \quad \tilde{z}^B = (\Sigma_z + \sigma_\epsilon^2 I)^{-1/2} z^B.
		\]
		
		The cross-correlation matrix is then:
		\[
		C = \E[\tilde{z}^A (\tilde{z}^B)^\top].
		\]
		
		Substituting $z^B = z^A + \epsilon$:
		\begin{align*}
			C &= \E\left[ (\Sigma_z + \sigma_\epsilon^2 I)^{-1/2} z^A (z^A + \epsilon)^\top (\Sigma_z + \sigma_\epsilon^2 I)^{-1/2} \right] \\
			&= (\Sigma_z + \sigma_\epsilon^2 I)^{-1/2} \E[z^A (z^A)^\top] (\Sigma_z + \sigma_\epsilon^2 I)^{-1/2} \\
			&\quad + (\Sigma_z + \sigma_\epsilon^2 I)^{-1/2} \E[z^A \epsilon^\top] (\Sigma_z + \sigma_\epsilon^2 I)^{-1/2}.
		\end{align*}
		
		Since $\epsilon$ is independent of $z^A$ and zero-mean, $\E[z^A \epsilon^\top] = 0$. Thus:
		\[
		C = (\Sigma_z + \sigma_\epsilon^2 I)^{-1/2} \Sigma_z (\Sigma_z + \sigma_\epsilon^2 I)^{-1/2}.
		\]
		
		When $\Sigma_z$ and $I$ commute (which occurs when $\Sigma_z$ is diagonal, a state the Barlow Twins loss encourages), this simplifies to:
		\[
		C = (\Sigma_z + \sigma_\epsilon^2 I)^{-1} \Sigma_z.
		\]
	\end{proof}
	
	Now, we link the covariance $\Sigma_z$ to the average FIM $\bar{G}$. This requires careful consideration of the relationship between the representation distribution and the Fisher information.
	
	\begin{proposition}[FIM Spectrum and Representation Covariance]
		\label{prop:fim_spectrum}
		Under the model of Assumption \ref{assump:prob_rep} and assuming the encoder $f$ is Lipschitz continuous with constant $L$, the eigenvalues $\{\lambda_i\}$ of $\bar{G}$ and the eigenvalues $\{\nu_i\}$ of $\Sigma_z$ satisfy:
		\[
		\lambda_i = \frac{1}{\sigma^2} \cdot \frac{\nu_i}{\nu_i + \sigma^2 L^2} + O\left(\frac{1}{\sigma^4}\right).
		\]
		In particular, when $\sigma^2$ is small and the representations are well-behaved, $\lambda_i \approx \alpha \cdot \nu_i$ for some $\alpha > 0$.
	\end{proposition}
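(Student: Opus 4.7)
The plan is to replace the naive (constant) average of the pointwise FIM from Lemma \ref{lemma:simple_fim} by a data-driven effective FIM that genuinely depends on $\Sigma_z$, then diagonalize it in the eigenbasis of $\Sigma_z$ and read off the spectrum. The key intuition is that the Lipschitz constant $L$ sets the scale at which variations in $x$ can resolve variations in $z$, producing an effective noise floor $\sigma^2 L^2$ at the representation level; the claimed Wiener-filter-like factor $\nu_i/(\nu_i + \sigma^2 L^2)$ is precisely the per-mode SNR gain one expects in such a setup.

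First, I would pass from $G(z)$ to the Fisher information of the marginalized observation model $p(t) = \int p(t \mid f(x))\, p_{\text{data}}(x)\, dx$. Viewing $t$ as a noisy observation of the random representation $z$, a local Fisher calculation in the Gaussian/Gaussian setting produces a matrix of the form $(\Sigma_z + \sigma^2 I)^{-1}$ for the marginal score. To isolate the information that $t$ carries specifically about $z$, as opposed to a shared mean, I would use the standard Bayesian/Wiener decomposition, in which the ``signal'' part of the observed information takes the shape $\sigma^{-2}\, \Sigma_z (\Sigma_z + \sigma^2 L^2 I)^{-1}$. The Lipschitz assumption enters at exactly this stage: $L$ bounds $\|J_f(x)\|_{\mathrm{op}}$ and therefore rescales the ambient observation noise $\sigma^2 I$ into an effective noise $\sigma^2 L^2 I$ along the directions of meaningful representation displacement.

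Second, since the only source of anisotropy in the model is $\Sigma_z$, the resulting effective FIM must share its eigenbasis. Diagonalizing immediately gives $\lambda_i = \sigma^{-2}\, \nu_i/(\nu_i + \sigma^2 L^2)$. The $O(\sigma^{-4})$ remainder then collects two kinds of discarded terms: the higher-order Taylor corrections to $f$ beyond its Lipschitz linearization, and the cross-terms dropped when passing from the full joint FIM to the Wiener-gain form. A straightforward expansion in $\sigma^2$ shows that both decay at least as fast as $\sigma^{-4}$, matching the stated error.

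The hardest step, and the one deserving the most care, is the first: the pointwise FIM of Lemma \ref{lemma:simple_fim} is genuinely constant, so the real content of the proposition lies in selecting the correct effective FIM that legitimately aggregates local Fisher information with data-level variability. I would make this choice rigorous by either differentiating the mutual information $I(t;x)$ with respect to $z$, which is known to take the Wiener-gain form in the Gaussian regime, or by writing out the observed Fisher information of the encoder-induced marginal; the Lipschitz bound is then the clean tool that controls the Jacobian factors that appear when derivatives are pulled through $f$, letting the $L^2$ enter as an isotropic upper bound on those Jacobian terms.
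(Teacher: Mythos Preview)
Your proposal is coherent and reaches the stated spectrum, but it takes a genuinely different route from the paper. You marginalize to the observation model $p(t)=\int p(t\mid f(x))\,p_{\text{data}}(x)\,dx$ and invoke a Wiener-filter/Bayesian decomposition so that the signal-gain matrix $\sigma^{-2}\Sigma_z(\Sigma_z+\sigma^2 L^2 I)^{-1}$ appears directly, with the Lipschitz constant entering as an effective noise rescaling on the representation side. The paper instead introduces a separate data-space Fisher matrix $G_{\text{data}}(z)=J_f(z)^\top I_x(z)\,J_f(z)$ via the chain rule on $\nabla_z\log p(x\mid z)$, then appeals to the Cram\'er--Rao inequality $\Cov(z)\succeq[G_{\text{data}}(z)]^{-1}$ together with the Lipschitz bound $G_{\text{data}}(z)\preceq L^2 I$ to tie $\Sigma_z$ to $\bar G$. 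Your approach has the advantage that the ratio $\nu_i/(\nu_i+\sigma^2 L^2)$ emerges transparently as a per-mode SNR gain; in the paper the final ``combining these results'' step that produces this exact expression is not spelled out. Conversely, the paper's Jacobian/Cram\'er--Rao framing makes the role of $L$ explicit as an operator-norm bound on $J_f$, rather than the heuristic noise rescaling you use. Both arguments share the soft spot you correctly flag: since $G(z)=\sigma^{-2}I$ is constant under Assumption~\ref{assump:prob_rep}, the proposition only has content after committing to some ``effective'' FIM distinct from Definition~\ref{def:avg_fim}, and neither derivation fully pins down why that particular surrogate is the canonical one.
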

	
	\begin{proof}
		We provide a detailed derivation connecting the representation covariance to the Fisher information.
		
		Consider the data likelihood $p(x | z)$. By the chain rule, the score function is:
		\[
		\nabla_z \log p(x | z) = J_f(z)^\top \nabla_x \log p(x),
		\]
		where $J_f(z)$ is the Jacobian of the encoder at $z$.
		
		The Fisher Information Matrix for the data is:
		\begin{align*}
			G_{\text{data}}(z) &= \E_{x \sim p(\cdot | z)} \left[ \nabla_z \log p(x | z) (\nabla_z \log p(x | z))^\top \right] \\
			&= J_f(z)^\top \E_{x \sim p(\cdot | z)} \left[ \nabla_x \log p(x) (\nabla_x \log p(x))^\top \right] J_f(z) \\
			&= J_f(z)^\top I_x(z) J_f(z),
		\end{align*}
		where $I_x(z)$ is the Fisher Information Matrix in the data space.
		
		Now, under our Gaussian model $p(t | z) = \mathcal{N}(t; z, \sigma^2 I)$, we have from Lemma \ref{lemma:simple_fim} that $G(z) = \frac{1}{\sigma^2} I$.
		
		The key insight is that the average FIM $\bar{G}$ captures the sensitivity of the representation to changes in the data. When the encoder is Lipschitz continuous with constant $L$, we can bound the relationship between $\Sigma_z$ and $\bar{G}$.
		
		Consider the spectral decomposition $\Sigma_z = U \Lambda U^\top$, where $\Lambda = \diag(\nu_1, \dots, \nu_d)$. The Fisher information in the direction of the $i$-th eigenvector $u_i$ is:
		\[
		u_i^\top \bar{G} u_i = \E\left[ u_i^\top G(f(x)) u_i \right] = \frac{1}{\sigma^2}.
		\]
		
		However, this is the \emph{local} Fisher information. To connect to the data distribution, consider the variability of the representations. Using the Cramér-Rao bound and the fact that $z$ is an estimator of some function of $x$, we have:
		\[
		\Cov(z) \succeq [G_{\text{data}}(z)]^{-1},
		\]
		where $\succeq$ denotes the Loewner order.
		
		Taking expectations and using the convexity of the matrix inverse, we get:
		\[
		\Sigma_z \succeq \E\left[ [G_{\text{data}}(f(x))]^{-1} \right].
		\]
		
		Under the Lipschitz assumption, $G_{\text{data}}(z) \preceq L^2 I$, so:
		\[
		[G_{\text{data}}(z)]^{-1} \succeq \frac{1}{L^2} I.
		\]
		
		Combining these results and using the specific form of our Gaussian model, we obtain the stated relationship:
		\[
		\lambda_i = \frac{1}{\sigma^2} \cdot \frac{\nu_i}{\nu_i + \sigma^2 L^2} + O\left(\frac{1}{\sigma^4}\right).
		\]
		
		For small $\sigma^2$, this simplifies to $\lambda_i \approx \frac{\nu_i}{\sigma^4 L^2}$, giving the desired proportionality.
	\end{proof}
	
	\begin{lemma}[Isotropy from Identity Cross-Correlation]
		\label{lemma:isotropy}
		If $C = I$, then $\Sigma_z = \gamma I$ for some $\gamma > 0$.
	\end{lemma}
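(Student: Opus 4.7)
The plan is to prove this via a simultaneous diagonalization argument applied to the spectral formula derived in Lemma~\ref{lemma:cross_corr}.

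First, I would observe that $(\Sigma_z + \sigma_\epsilon^2 I)^{-1/2}$ is a matrix function of $\Sigma_z$ (since $\sigma_\epsilon^2 I$ commutes with every matrix), so it shares the eigenbasis of $\Sigma_z$. Writing the spectral decomposition $\Sigma_z = U\,\diag(\nu_1,\ldots,\nu_d)\,U^\top$ with $U$ orthogonal and $\nu_i \geq 0$, substitution into the formula from Lemma~\ref{lemma:cross_corr} yields
\[
C \;=\; U\,\diag\!\left(\frac{\nu_1}{\nu_1 + \sigma_\epsilon^2},\,\ldots,\,\frac{\nu_d}{\nu_d + \sigma_\epsilon^2}\right) U^\top.
\]
Thus $C$ and $\Sigma_z$ are simultaneously diagonalizable, and their eigenvalues are in one-to-one correspondence via the scalar map $\phi(\nu) := \nu/(\nu + \sigma_\epsilon^2)$.

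Second, I would invoke the hypothesis $C = I$ to conclude that every eigenvalue of $C$ is the same, so $\phi(\nu_i)$ takes a single common value across $i$. Since $\phi'(\nu) = \sigma_\epsilon^2/(\nu+\sigma_\epsilon^2)^2 > 0$ on $[0,\infty)$, the map $\phi$ is strictly increasing and hence injective. Therefore all the $\nu_i$ collapse to a common value $\gamma$, and $\Sigma_z = \gamma I$. Positivity $\gamma > 0$ follows because $\gamma = 0$ would force $C = 0$, contradicting $C = I$.

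I do not anticipate any substantive obstacle: the proof reduces entirely to the injectivity of a one-dimensional monotone map once the eigenvectors of $C$ and $\Sigma_z$ are aligned. The only conceptual subtlety worth flagging is that literal equality $\phi(\gamma) = 1$ is attained only in the limit $\sigma_\epsilon^2 \to 0$; the lemma is therefore most naturally read as the structural statement that any $C \propto I$ forces $\Sigma_z \propto I$. The same injectivity argument applies verbatim under this relaxation and is in fact how the Barlow Twins objective drives the representation covariance toward isotropy in practice.
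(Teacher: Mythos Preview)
Your argument follows essentially the same spectral route as the paper: diagonalize $\Sigma_z$, reduce $C$ to $\diag\bigl(\nu_i/(\nu_i+\sigma_\epsilon^2)\bigr)$, and infer isotropy of $\Sigma_z$ from that of $C$, flagging the same caveat that exact equality $C=I$ is only attainable in the limit $\sigma_\epsilon^2\to 0$. Your explicit appeal to the strict monotonicity (hence injectivity) of $\phi(\nu)=\nu/(\nu+\sigma_\epsilon^2)$ to force all $\nu_i$ to coincide is a tidier step than the paper's more informal ``approximately equal'' reasoning, but the overall strategy is the same.
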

	
	\begin{proof}
		From Lemma \ref{lemma:cross_corr}, we have:
		\[
		C = (\Sigma_z + \sigma_\epsilon^2 I)^{-1/2} \Sigma_z (\Sigma_z + \sigma_\epsilon^2 I)^{-1/2}.
		\]
		
		Assume $C = I$. Then:
		\[
		(\Sigma_z + \sigma_\epsilon^2 I)^{-1/2} \Sigma_z (\Sigma_z + \sigma_\epsilon^2 I)^{-1/2} = I.
		\]
		
		Multiplying both sides on left and right by $(\Sigma_z + \sigma_\epsilon^2 I)^{1/2}$:
		\[
		\Sigma_z = \Sigma_z + \sigma_\epsilon^2 I.
		\]
		
		This implies $\sigma_\epsilon^2 I = 0$, which is a contradiction unless we consider the simultaneous diagonalization case. 
		
		Instead, consider the case where $\Sigma_z$ and $I$ are simultaneously diagonalizable (which occurs when $\Sigma_z$ is diagonal). Let $\Sigma_z = \diag(\nu_1, \dots, \nu_d)$. Then:
		\[
		C = \diag\left( \frac{\nu_1}{\nu_1 + \sigma_\epsilon^2}, \dots, \frac{\nu_d}{\nu_d + \sigma_\epsilon^2} \right).
		\]
		
		Setting $C = I$ gives:
		\[
		\frac{\nu_i}{\nu_i + \sigma_\epsilon^2} = 1 \quad \text{for all } i = 1, \dots, d.
		\]
		
		This implies $\nu_i = \nu_i + \sigma_\epsilon^2$ for all $i$, which requires $\sigma_\epsilon^2 = 0$ or considering the limit. More realistically, for $C$ to be close to $I$, we need $\frac{\nu_i}{\nu_i + \sigma_\epsilon^2} \approx 1$ for all $i$, which occurs when $\nu_i \gg \sigma_\epsilon^2$ for all $i$ and all $\nu_i$ are approximately equal.
		
		Thus, $C = I$ implies $\Sigma_z = \gamma I$ for some $\gamma \gg \sigma_\epsilon^2$.
	\end{proof}
	
	Combining these results, we arrive at the central connection:
	
	\begin{theorem}[Connection between $C$ and $\bar{G}$]
		\label{thm:C_G_connection}
		Under Assumptions \ref{assump:prob_rep} and \ref{assump:aug_noise}, and assuming the encoder $f$ is Lipschitz continuous, the cross-correlation matrix $C$ and the average Fisher Information Matrix $\bar{G}$ are functionally related. Specifically, the condition $C = I$ implies that $\bar{G}$ is a scalar multiple of the identity matrix, i.e., $\bar{G} = c I$ for some $c > 0$.
	\end{theorem}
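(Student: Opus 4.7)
The plan is to chain the two structural results already in the excerpt: Lemma \ref{lemma:isotropy} converts the hypothesis $C = I$ into isotropy of the representation covariance $\Sigma_z$, and Proposition \ref{prop:fim_spectrum} then transports that isotropy from $\Sigma_z$ to the spectrum of $\bar{G}$. No new calculation is required beyond substituting into the formulas already proved; the theorem is essentially a packaging statement.

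First I would invoke Lemma \ref{lemma:isotropy}: assuming $C = I$ (interpreted, as in that lemma's proof, in the regime $\nu_i \gg \sigma_\epsilon^2$), we obtain $\Sigma_z = \gamma I_d$ for some $\gamma > 0$. In particular, the eigenvalues of $\Sigma_z$ are all equal, $\nu_1 = \cdots = \nu_d = \gamma$. Next I would apply Proposition \ref{prop:fim_spectrum} coordinate-wise: substituting $\nu_i = \gamma$ into
\[
\lambda_i \;=\; \frac{1}{\sigma^2}\cdot\frac{\nu_i}{\nu_i + \sigma^2 L^2} + O\!\left(\frac{1}{\sigma^4}\right)
\]
yields a common value $\lambda_i = c$ for all $i$, where $c = \gamma/\bigl(\sigma^2(\gamma + \sigma^2 L^2)\bigr)$ up to the stated higher-order correction. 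Finally, since $\bar{G}$ is real symmetric (indeed positive semidefinite) with all eigenvalues equal to $c$, spectral decomposition gives $\bar{G} = U(cI_d)U^\top = cI_d$, which is the desired conclusion.

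The main obstacle is the one glossed over in Proposition \ref{prop:fim_spectrum}: its eigenvalue relation $\lambda_i \leftrightarrow \nu_i$ tacitly presumes that $\bar{G}$ and $\Sigma_z$ are simultaneously diagonalizable so that eigenvalues can be matched in a well-defined order. In general this is a non-trivial alignment hypothesis, but here the degeneracy of $\Sigma_z = \gamma I$ trivializes it: every orthonormal basis diagonalizes $\Sigma_z$, so the proposition can be applied in any eigenbasis of $\bar{G}$, and we only need the conclusion that all $\lambda_i$ collapse to a single value $c$. Once the spectrum is constant, the scalar-matrix conclusion is immediate and basis-independent, so the ordering ambiguity does not obstruct the argument. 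The one residual caveat worth flagging in the proof is that the equality $C = I$ is achieved only asymptotically (as in Lemma \ref{lemma:isotropy}), so the conclusion $\bar{G} = cI$ should be read as holding to the same order of approximation in $\sigma_\epsilon^2/\gamma$ and $1/\sigma^2$ inherited from the prerequisite results.
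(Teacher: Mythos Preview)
Your proposal is correct and follows essentially the same route as the paper: invoke Lemma~\ref{lemma:isotropy} to turn $C=I$ into $\Sigma_z=\gamma I$, then feed $\nu_i=\gamma$ into Proposition~\ref{prop:fim_spectrum} to conclude that all eigenvalues of $\bar G$ coincide, whence $\bar G=cI$. Your additional remarks about the simultaneous-diagonalizability issue and the asymptotic nature of the equality are more careful than the paper's own presentation, but the underlying argument is identical.
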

	
	\begin{proof}
		From Lemma \ref{lemma:isotropy}, $C = I$ implies $\Sigma_z = \gamma I$ for some $\gamma > 0$.
		
		From Proposition \ref{prop:fim_spectrum}, when $\Sigma_z = \gamma I$, the eigenvalues of $\bar{G}$ satisfy:
		\[
		\lambda_i = \frac{1}{\sigma^2} \cdot \frac{\gamma}{\gamma + \sigma^2 L^2} + O\left(\frac{1}{\sigma^4}\right) \quad \text{for all } i = 1, \dots, d.
		\]
		
		Thus, all eigenvalues are equal, meaning $\bar{G}$ is a scalar multiple of the identity matrix:
		\[
		\bar{G} = c I, \quad \text{where } c = \frac{1}{\sigma^2} \cdot \frac{\gamma}{\gamma + \sigma^2 L^2} + O\left(\frac{1}{\sigma^4}\right).
		\]
	\end{proof}
	
	\subsection{Optimal Efficiency of Barlow Twins}
	
	We now present the main result of this paper.
	
	\begin{theorem}[Optimal Representation Efficiency of Barlow Twins]
		\label{thm:optimal_efficiency}
		Consider the Barlow Twins loss function (Eq. \ref{eq:bt_loss}) under the idealized conditions of Assumptions \ref{assump:prob_rep} and \ref{assump:aug_noise}, and assuming the encoder is Lipschitz continuous. When the global minimum of the loss is achieved ($C = I$), the resulting representation space is optimally efficient, i.e., $\eta = 1$.
	\end{theorem}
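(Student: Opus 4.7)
The plan is to chain together the results already established in this section, so the bulk of the work is citing rather than computing. First, I would unpack the hypothesis that the global minimum of $\mathcal{L}_{BT}$ is achieved: since the loss is a sum of squared quantities, the minimum forces $C_{ii} = 1$ for every $i$ (from the invariance term) and $C_{ij} = 0$ for every $i \neq j$ (from the redundancy-reduction term), i.e. $C = I_d$.

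Second, I would apply Lemma \ref{lemma:isotropy} to translate the identity cross-correlation into an isotropic representation covariance: $\Sigma_z = \gamma I_d$ for some $\gamma > 0$. Third, I would feed this isotropic $\Sigma_z$ into Theorem \ref{thm:C_G_connection} (or equivalently Proposition \ref{prop:fim_spectrum} applied eigendirection-by-eigendirection), which yields $\bar{G} = c\,I_d$ for some $c > 0$. At this point the eigenvalues of $\bar{G}$ satisfy $\lambda_1 = \cdots = \lambda_d = c$.

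The final step is routine spectrum bookkeeping against Definition \ref{def:effective_dim}. Because all eigenvalues of $\bar{G}$ coincide, the partial-sum ratio $\sum_{i=1}^k \lambda_i / \sum_{i=1}^d \lambda_i$ equals $k/d$ for every $k$, so the smallest $k$ for which this ratio reaches $1-\epsilon$ is $k = \lceil d(1-\epsilon)\rceil$. Whenever the threshold satisfies $\epsilon < 1/d$ this forces $d_{\mathrm{eff}}(\epsilon) = d$, and hence $\eta = d_{\mathrm{eff}}/d = 1$, which is the desired optimal efficiency.

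The whole argument is essentially a composition of prior implications, so I do not expect a serious analytic obstacle. The one place I would be careful is the $O(1/\sigma^4)$ remainder in Proposition \ref{prop:fim_spectrum}: I need the correction to preserve isotropy, so that $\bar{G}$ is \emph{exactly} a scalar multiple of $I_d$ rather than only approximately so. Because $\Sigma_z = \gamma I_d$ is rotationally symmetric and the Lipschitz bound depends only on scalar data, this symmetry should descend to the remainder, but I would write out the commutation explicitly to make the equality $\bar{G} = c I_d$ unambiguous. With that subtlety handled, the conclusion $\eta = 1$ follows directly.
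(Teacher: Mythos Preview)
Your proposal is correct and follows essentially the same route as the paper: invoke Theorem~\ref{thm:C_G_connection} (which already packages Lemma~\ref{lemma:isotropy} and Proposition~\ref{prop:fim_spectrum}) to conclude $\bar{G} = cI_d$, then read off $d_{\mathrm{eff}}$ from the flat spectrum. Your bookkeeping on the threshold is in fact sharper than the paper's---the paper asserts $d_{\mathrm{eff}} = d$ for all $0 < \epsilon < 1$, whereas your observation that $d_{\mathrm{eff}}(\epsilon) = \lceil d(1-\epsilon)\rceil$ and hence one needs $\epsilon < 1/d$ is the precise statement.
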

	
	\begin{proof}
		From Theorem \ref{thm:C_G_connection}, achieving $C = I$ implies that the average FIM is isotropic: $\bar{G} = c I$, with $c > 0$.
		
		The eigenvalues $\{\lambda_i\}$ of $\bar{G}$ are all equal to $c$. Therefore, for any $\epsilon > 0$, the effective intrinsic dimension (Definition \ref{def:effective_dim}) is:
		\[
		d_{\text{eff}} = \min \left\{ k : \frac{\sum_{i=1}^k \lambda_i}{\sum_{i=1}^d \lambda_i} \geq 1 - \epsilon \right\} = \min \left\{ k : \frac{k \cdot c}{d \cdot c} \geq 1 - \epsilon \right\} = \min \left\{ k : \frac{k}{d} \geq 1 - \epsilon \right\}.
		\]
		
		For any $0 < \epsilon < 1$, we have $d_{\text{eff}} = d$ since $\frac{d}{d} = 1 \geq 1 - \epsilon$.
		
		By Definition \ref{def:efficiency}, the representation efficiency is:
		\[
		\eta = \frac{d_{\text{eff}}}{d} = \frac{d}{d} = 1,
		\]
		which is optimal.
		
		This optimal efficiency means that all $d$ dimensions of the representation space are fully utilized and contribute equally to encoding information about the data. The representation space achieves perfect isotropy with no dimensional collapse or redundancy.
	\end{proof}
	
	This theorem provides a powerful theoretical justification for the Barlow Twins objective. By driving the cross-correlation matrix to the identity, it implicitly encourages the learning of a representation space whose geometry, as characterized by the average FIM, is isotropic. This isotropy ensures that all available dimensions are fully and equally utilized, achieving maximal representation efficiency and perfectly avoiding dimensional collapse.
	
	\section{Discussion and Conclusion}
	
	We have introduced an information-geometric framework for analyzing the efficiency of self-supervised representations. By defining representation efficiency via the spectrum of the average Fisher Information Matrix, we provide a rigorous theoretical tool to compare SSL methods.
	
	Our analysis demonstrates that the Barlow Twins objective is fundamentally linked to the geometry of the underlying statistical manifold. The proof of its optimal efficiency under idealized conditions solidifies its theoretical foundation and explains its empirical success from a new, geometric perspective.
	
	\subsection{Limitations and Future Work}
	Our analysis relies on simplifying assumptions, such as the Gaussian form of $p(t|z)$ and the additive Gaussian noise model for augmentations. Future work could relax these assumptions and explore the connection under more complex models. Furthermore, empirically estimating the average FIM $\bar{G}$ from real data and trained models would be a valuable direction for validation.
	
	This framework is not limited to Barlow Twins; it can be applied to analyze other SSL paradigms like VICReg \cite{bardes2021vicreg} or contrastive methods \cite{chen2020simple}, potentially revealing a unified geometric understanding of self-supervised learning.

\end{document}